\newtheorem{theorem}{Theorem}
\begin{document} 

\title{Learning Local Invariant Mahalanobis Distances}

\date{}
\author{Ethan Fetaya \\
 Weizmann Institute of Science \\
 \and 
 Shimon Ullman \\
 Weizmann Institute of Science \\}


\maketitle
\begin{abstract} 
For many tasks and data types, there are natural
transformations to which the data should be
invariant or insensitive. For instance, in visual
recognition, natural images should be insensitive
to rotation and translation. This requirement and
its implications have been important in many machine
learning applications, and tolerance for image
transformations was primarily achieved by
using robust feature vectors. In this paper we propose a novel and computationally
efficient way to learn a local Mahalanobis
metric per datum, and show how we can
learn a local invariant metric to any transformation
in order to improve performance.
\end{abstract} 

Metric learning is a machine learning task which learns
a distance metric $d(x,y)$ between data points, based on data instances. As distances play an important role in many machine learning algorithms, e.g. k-Nearest Neighbor and k-Means clustering, finding an appropriate metric for the task can improve performance considerably. This approach has been applied successfully to  many problems such as face identification \cite{faceId}, image retrieval \cite{imRet,deepRank}, ranking \cite{ranking} and clustering \cite{clustering} to name just a few.\\

A standard approach to metric learning is to learn a global Mahalanobis metric 
\begin{equation}
d(x,y)^2_M=(x-y)^TM(x-y)
\end{equation}
Where $M$ is a positive semi-definite matrix (PSD). The PSD constraint only assures this is a pseudometric , but for simplicity we will not make this distinction. Various algorithms \cite{lmnn,eqConst,infoMet} differ by the objective through which they learn the matrix $M$ from the data. As $M$ is a PSD matrix, it can  be written as $M=L^TL$ and therefore
\begin{align*}
&d(x,y)^2_M=(x-y)^TM(x-y)=||\tilde{x}-\tilde{y}||^2_2\\
&\tilde{x}=Lx,\,\,\tilde{y}=Ly.
\end{align*}
This means that finding an optimal Mahalanobis distance is equivalent to finding the optimal linear transformation on the data, and then using $L_2$ distance on the transformed data. This approach has two limitations, first it is limited to linear transformation. Second, it requires a large amount of labeled data.\\

One approach that can be used to overcome the first limitation is to use local distances \cite{local} where we learn a unique distance function per training datum.  Local approaches do not produce, in general, a global metric (as they are usually not symmetrical) but are commonly considered metric learning nonetheless. These methods, in general, need similar and dissimilar training data for each local metric.  \\

In our current work we use a local approach inspired by the work on exemplar-SVM \cite{exemplar}, that showed that using only negative examples can suffice for good performance. The intuition behind this is that objects of the same class do not necessarily have to be similar, but objects from different classes must be dissimilar. We will show how to learn a local Mahalanobis distance that for each datum tries to keep the non-class as far away as possible. This approach can use a large amount of weakly supervised data, as in many cases negative examples are easier then positive examples to acquire. For example, if we are interested in face identification, we can learn a local metric around a query face image given a bank of train face images, which we only assume do not belong to the queried person. Unlike other metric learning methods, we will not need any labels on which image belongs to which person in the negative set.\\

The intuition why Mahalanobis distances are the natural model for local metrics  is simple. Assume we have some metric $d(x,y)$ on the dataset and assume that it is smooth (at least continuously twice differentiable). From the metric properties we know that if we fix $x$ and look at $f(y)=d(y,x)$ then $f$ has a global minimum at $y=x$. Applying second order Tylor approximation to $f$ around $x$ we get
\begin{equation}
\begin{split}
&d(y,x)=f(y)\approx f(x)+(y-x)^T\nabla f(x)+\\&(y-x)^T\nabla^2f(x)(y-x)=(y-x)^T\nabla^2f(x)(y-x)
\end{split}
\end{equation}
The equality holds since $x$ is the global minimum with value $f(x)=d(x,x)=0$, and this also implies that $\nabla^2f(x)$ is positive semidefinite. While the Taylor approximation only holds for values of $y$ close to $x$, as metric methods such as k-NN focus on similar objects the approximation should be good at the points of interest. This observation leads us to look for local matrices that are of the form of a Mahalanobis distance. \\

We will first define our local Mahalanobis distance learning method as a semidefinite programming problem. We will then show how this problem can be solved efficiently without any costly matrix decompositions. This allows us to solve high dimensional problems that regular semidefinite solvers cannot handle.\\

The second major contribution of this paper will be to show how invariant local matrices can be learned. In many cases we know there are simple transformations that our metric should not be sensitive to. For example, small translation and rotation on natural images. We know a priori that if $x'=T(x)$, where $T$ is the said transformation, then $d(x,x')\approx 0$. We will  show how this prior knowledge about our data can by incorporated by learning a local invariant metric. This also can be done in an efficient manner, and we will show that this improves performance in our experiments.

\section{Related work}
Metric learning is an active research field with many algorithms, generally divided into linear \cite{lmnn} which learn a Mahalanobis distance, non-linear \cite{nonLinear} that learn a nonlinear transformation and use $L_2$  distance on the transformed space, and local which learn a metric per datum. 
The LMNN and MLMM \cite{lmnn} algorithm  are considered the leading  metric learning method. For a recent comprehensive survey that covers linear, non-linear and local methods see \cite{survey}. \\

The exemplar-SVM algorithm \cite{exemplar} can be seen as a local similarity measure. This is obtained by maximizing margins, with a linear model, and is weakly supervised as our work. Unlike exemplar-SVM, we learn a Mahalanobis matrix and can learn an invariant metric.
Another related work is PMLM \cite{PMLM}, which also finds a local Mahalanobis metric for each data point. However, this method uses global constraints, and therefore cannot work with weakly supervised data, i.e. a single positive example. All the techniques above do not learn local invariant metrics.\\

The most common way to achieve invariance, or at least insensitivity, to a transformation  in computer vision applications is  by using hand-crafted descriptors such as SIFT \cite{sift} or HOG \cite{hog}. Another way, used in convolutional networks \cite{cnn}, is by adding pooling  and subsampling forcing the net to be insensitive to small translations. It is important to note that transformations such as rotations have a global behaviour, i.e. there is a global consistency between the pixel movement. This global consistency is not totally captured by the pooling and subsampling. As we will see in our experiments, using an invariant metric can be useful even when working with robust features such as HOG.

\section{Local Mahalanobis}
In this section we will show how a local Mahalanobis distance with maximal margin can be learned in a fast and simple way.\\

We will assume that we are given a single query image $x_0$ that belong to some class, e.g. a face of a person. We will also be given a set of negative data $x_1,...,x_N$ that do not belong to that class, e.g. a set of face images of various other people. We will learn a local Mahalanobis metric for $x_0$, $M(x_0)\succeq 0$, where $M\succeq 0$ means $M$ is positive semi-definite. For matrices  $M,N$, we will denote by $||M||$ the Frobinous norm $||M||^2=\sum_{ij}M_{ij}^2$ and by $\left<M,N\right>$ the standard inner product $\left<M,N\right>=\sum_{ij}M_{ij}N_{ij}$.\\

We wish to find a Mahalanobis matrix $M$ given the positive datum $x_0$ and the negative data $x_1,...,n_n$. Large margin methods have been very successful in metric learning \cite{lmnn}, and more generally in machine learning, therefore, our algorithm will look for the PSD matrix $M$ that maximizes the distance to the closest negative example
\begin{equation}
M=\arg\max_{M\succeq 0}(\min_{1\leq i\leq n}(x_i-x_0)^TM(x_i-x_0))
\end{equation}

The optimization cannot be solved as it is not bounded, since multiplying $M$ by a scalar multiplies the minimum distance by the same scalar. This can be solved by normalizing $M$ to have $||M||=1$. As normally done with margin methods, we can minimize the norm under fixed margin constrained instead of maximizing the margin under fixed norm constraint. The  resulting objective is  

\begin{equation}\label{noRelax}
\begin{split}
M(x_0) &=\arg\min_M \frac{1}{2}||M||^2 \\
subject\,\, to :\,\,& (x_i - x_0)^TM(x_i - x_0) \geq 2\,\,\,\,\forall i \in \{1,..., n\} \\
&M \succeq 0
\end{split}
\end{equation}
Where the constant $2$ is arbitrary and will be convenient later on. While this is a convex semidefinite programming task, it is very slow for reasonable dimensional data (in the thousands)  even for state of the art solvers. This is because PSD solvers apply a projection to the semidefinite cone, performing an expensive singular value or eigen decomposition at each iteration.\\

To solve this optimization in a fast manner we will first relax the PSD constraint and look at the following objective 
\begin{equation}\label{relax}
\begin{split}
M(x_0) &=\arg\min_M \frac{1}{2}||M||^2 \\
subject\,\, to :\,\,& (x_i - x_0)^TM(x_i - x_0) \geq 2\,\,\,\,\forall i \in \{1,..., n\}
\end{split}
\end{equation}
We will then see how this is equivalent to a kernel SVM problem with a quadratic kernel, and therefore can be solved easily with off-the-shelf SVM solvers such as LIBSVM \cite{libsvm}. Finally we will show how the solution of objective \ref{relax} is in fact the solution of objective \ref{noRelax} resulting in a fast solution to objective \ref{noRelax} without any matrix decomposition.

\begin{theorem}\label{trmSVM}
The solution of objective \ref{relax} is given by running kernel SVM with kernel $k(x,y)=\left<x,y\right>^2$ on inputs $\tilde{x}_0,\tilde{x}_1,...,\tilde{x}_n$ where $\tilde{x}_i= x_i-x_0$ 
\end{theorem}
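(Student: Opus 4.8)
The plan is to recognize objective \ref{relax} as a hard-margin linear SVM once the data are lifted through the feature map $\phi(x)=xx^T$, and then simply read off the induced kernel. First I would rewrite the quadratic form in the constraints as a linear functional of $M$: since $\tilde x_i^T M \tilde x_i = \langle M, \tilde x_i \tilde x_i^T\rangle$, treating the matrix $M$ as a weight vector $w$ (its stacked entries) and setting $\phi(\tilde x_i) = \tilde x_i \tilde x_i^T$, the constraints become the linear inequalities $\langle w, \phi(\tilde x_i)\rangle \ge 2$, while the objective $\tfrac12\|M\|^2$ equals $\tfrac12\|w\|^2$ because the Frobenius norm of $M$ is the Euclidean norm of its vectorization. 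The kernel then follows immediately from $\langle \phi(x),\phi(y)\rangle = \langle xx^T, yy^T\rangle = \sum_{jk} x_jx_k y_j y_k = \langle x,y\rangle^2$, which is exactly the claimed $k(x,y)=\langle x,y\rangle^2$.

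Next I would match objective \ref{relax} to the SVM primal \emph{with} a bias term, the key device being the point $\tilde x_0 = x_0-x_0 = 0$, whose image is $\phi(\tilde x_0)=0$. I would label the negatives $\tilde x_1,\dots,\tilde x_n$ as the positive class ($y_i=+1$) and the query $\tilde x_0$ as the single negative example ($y_0=-1$), and impose the usual margin constraints $y_j(\langle w,\phi(\tilde x_j)\rangle + b)\ge 1$. The constraint from $\tilde x_0$ reads $-(\langle w,0\rangle + b) = -b \ge 1$, i.e. $b \le -1$, whereas the constraints from the negatives read $\langle w,\phi(\tilde x_i)\rangle \ge 1-b$. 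Since $b$ does not appear in the objective and lowering $b$ only tightens the remaining constraints, at the optimum $b=-1$, and the negative constraints collapse to $\langle w,\phi(\tilde x_i)\rangle \ge 2$, which are precisely the constraints of objective \ref{relax}.

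With this correspondence the two programs have identical objective in $w=M$ and, after eliminating $b$, identical feasible sets, so their minimizers coincide; feasibility is clear because $M=cI$ satisfies every constraint for $c$ large enough. This reduces objective \ref{relax} to a kernel SVM that can be handed to an off-the-shelf dual solver, recovering $M=\sum_i \alpha_i y_i \tilde x_i \tilde x_i^T$ from the support coefficients without ever forming a $d\times d$ matrix. I expect the only real subtlety to be the bias argument: one must justify that the optimal $b$ is \emph{exactly} $-1$ (so that the SVM margin of $1$ matches the margin of $2$ in objective \ref{relax}) rather than merely feasible, and confirm that the degenerate example sitting at the origin behaves as intended. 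The norm and kernel identities are routine, so the crux of the proof is this handling of the bias and the verification that the two feasible regions genuinely agree.
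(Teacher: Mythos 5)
Your proposal is correct and takes essentially the same route as the paper: lift the data through $\varphi(x)=xx^{T}$, exploit the fact that $\tilde{x}_0=0$ serves as the single negatively-labeled point, identify $\langle\varphi(x),\varphi(y)\rangle=\langle x,y\rangle^{2}$, and match the margin constraints of objective \ref{relax} to the SVM primal. The only (minor) difference is that the paper simply declares the bias fixed at $-1$, whereas you let the bias float and argue it must equal $-1$ at the optimum --- a slightly more careful justification for handing the problem to a standard solver, but not a genuinely different argument.
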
 
\begin{proof}
Define $\varphi(x)=x\cdot x^T$, a function that maps a column vector to a matrix. This function has the following simple properties:
\begin{itemize}
\item $k(x,y)=\left<x,y\right>^2=\left<\varphi(x),\varphi(y)\right>$,  i.e.  the function $\varphi$ is the mapping associated with the quadratic kernel.
\item For any matrix $W$, we have $\left<W,\varphi(x)\right>=x^TWx$.
\end{itemize}  
which can be easily verified using $\varphi(x)_{ij}=x_ix_j$. We can define auxiliary labelling  $y_0 = -1$ and $y_i=1$ for $1\leq i \leq n$. Combining everything objective \ref{relax} can be rewritten as 
\begin{equation}\label{svm}
\begin{split}
M(x_0) &=\arg\min_M \frac{1}{2}||M||^2 \\
subject\,\, to :\,\,& y_i\cdot\left(\left<M,\varphi(\tilde{x}_i)\right> -1\right)\geq 1\,\,\,\,\forall i \in \{0,1,..., n\}
\end{split}
\end{equation}
where we can include $i=0$ as $\left<M,\varphi(\tilde{x}_0)\right>=0$ for any matrix. \\

Objective \ref{svm} is exactly an SVM problem with quadratic kernel, with bias fixed to one,  given inputs $\tilde{x}_0,...,\tilde{x_n}$, proving the theorem. Notice also that for the identity matrix $M=I$ we have $\left<M,\varphi(\tilde{x_i})\right>>0$ for $i\geq 1$ and $\left<M,\varphi(\tilde{x_i})\right>=0$ for $i=0$, therefore the data is separable by $M=I$ and the optimization is feasible.
\end{proof}

Now that we have shown how objective \ref{relax} can be converted into a standard SVM form, for which efficient solvers exists, we will show how it is the solution to objective \ref{noRelax}.

\begin{theorem}\label{trmRLX}
The solution to objective \ref{relax} is the solution to objective \ref{noRelax}.
\end{theorem}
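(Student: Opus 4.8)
The plan is to exploit the containment of feasible regions together with the explicit form of the optimizer furnished by the KKT (equivalently, SVM representer) conditions. First I would observe that the feasible set of objective \ref{noRelax} is exactly the feasible set of objective \ref{relax} intersected with the PSD cone; hence every feasible point of \ref{noRelax} is feasible for \ref{relax}, and consequently the optimal value of \ref{relax} is at most that of \ref{noRelax}. Since both problems minimize the same strictly convex, coercive objective $\tfrac{1}{2}\|M\|^2$ over a nonempty (by the feasibility of $M=I$ shown in Theorem \ref{trmSVM}), closed, convex feasible set, each admits a unique minimizer. It therefore suffices to show that the minimizer $M^\ast$ of the relaxed problem \ref{relax} already lies in the PSD cone: once this is established, $M^\ast$ is feasible for \ref{noRelax} and attains a value no larger than the optimum of \ref{noRelax}, forcing it to be the unique optimum of \ref{noRelax} as well.

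The crux is thus to prove $M^\ast \succeq 0$, and for this I would write down the KKT conditions of \ref{relax}. The constraints $\left<M,\varphi(\tilde{x}_i)\right>\geq 2$ are affine in $M$, so the linearity constraint qualification holds automatically and the KKT conditions are necessary (and, by convexity, sufficient) at the optimum. Introducing multipliers $\alpha_i \geq 0$ for the margin constraints, the Lagrangian is $L(M,\alpha)=\tfrac{1}{2}\|M\|^2-\sum_{i=1}^n\alpha_i(\left<M,\varphi(\tilde{x}_i)\right>-2)$, and the stationarity condition $\nabla_M L=0$ yields the representer identity
\[
M^\ast=\sum_{i=1}^n\alpha_i\,\varphi(\tilde{x}_i)=\sum_{i=1}^n\alpha_i\,\tilde{x}_i\tilde{x}_i^T.
\]
Because each $\alpha_i\geq 0$ and each rank-one matrix $\tilde{x}_i\tilde{x}_i^T$ is positive semidefinite, $M^\ast$ is a nonnegative combination of PSD matrices and is therefore itself PSD (and, as a bonus, automatically symmetric). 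This is precisely the structural fact the argument needs.

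Combining the two steps finishes the proof: the relaxed optimizer $M^\ast$ is PSD, hence feasible for \ref{noRelax}, and since $\tfrac{1}{2}\|M^\ast\|^2$ equals the optimal value of \ref{relax}, which is at most the optimum of \ref{noRelax}, $M^\ast$ must solve \ref{noRelax} too. I expect the main obstacle to be one of rigor rather than ideas: one must confirm that the stationarity condition genuinely applies (well-posedness, uniqueness, and constraint qualification) and be careful that the minimization is implicitly over symmetric matrices. The cleanest way around the symmetry point is to note that the representer form makes symmetry and semidefiniteness fall out simultaneously, so no separate symmetry argument is needed. An alternative route that avoids writing the Lagrangian from scratch is to invoke Theorem \ref{trmSVM} directly and quote the SVM representer theorem, observing that the $i=0$ term drops because $\tilde{x}_0=0$, which produces the same nonnegative rank-one expansion.
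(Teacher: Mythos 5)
Your proposal is correct and follows essentially the same route as the paper: both reduce the claim to showing the relaxed minimizer is PSD, obtain the representer form $M=\sum_i\alpha_i\varphi(\tilde{x}_i)$ with $\alpha_i\geq 0$ from the SVM dual/KKT conditions, and conclude PSD-ness because a nonnegative combination of rank-one PSD matrices lies in the PSD cone. Your added care about feasibility containment, uniqueness, and constraint qualification fills in details the paper leaves implicit, but it is the same argument.
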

\begin{proof}
To prove the theorem it suffices to show that the solution is indeed positive semidefinite. A well known observation arrising from the dual formulation of the SVM objective \cite{svmTut} is that the optimal solution $M$ has the form \begin{equation}
M=\sum_{i=0}^n\alpha_iy_i\varphi(\tilde{x}_i),\quad \alpha_i\geq 0.
\end{equation} Since $\varphi(x)\succeq 0$ for any $x$, as its only nonzero eigenvalue is $||x||$,  $\varphi(\tilde{x}_0)=\varphi(0)=0$, and $y_i=1$ for $i\geq1$  we get
\begin{equation}\label{9}
M=\sum_{i=0}^n\alpha_iy_i\varphi(\tilde{x}_i)=\sum_{i=1}^n\alpha_i\varphi(\tilde{x}_i)\succeq 0,
\end{equation}
where the positive semidefiniteness in eq. \ref{9} is assured  due to the set of PSD matrices being a convex cone.
\end{proof}

Combining theorem \ref{trmSVM} with theorem \ref{trmRLX} we get that in order to solve objective \ref{noRelax} it is enough to run an SVM solver with a quadratic  kernel function, thus avoiding any matrix decomposition. \\

Looking at this as a SVM problem has further benefits. The SVM solvers do not compute $M$ directly, but return the set of support vectors $\tilde{x}_{i_1},...,\tilde{x}_{i_k}$ and coefficients $\alpha_{i_1},...,\alpha_{i_k}$ such that $M=\sum_k\alpha_{i_k}\varphi(\tilde{x}_{i_k})$. This allows us to work in high dimension $d$, where the $\mathcal{O}(d^2)$ memory needed to store the matrix can be a problem, and can slow computations further. As the rank of the matrix is bounded by the number of support vectors, one can see that in many applications we get a relatively low rank matrix. This bound on the rank can be improved by using sparse-SVM algorithms \cite{ssvm}. In practice we got low rank matrices without resorting to sparse $SVM$ solvers.

\section{Local Invariant Mahalanobis}
For some applications, we know a priori that certain transformations should have a small effect on the metric. We will show how to include this knowledge into the local metric we learn, learning locally invariant metrices. In section \ref{experiments} we will see this has a major effect on performance. \\

Assume we know a set of functions $T_1,...,T_k$ that the desired metric should be insensitive to, i.e. $d(x,T_i(x))$ should be small for all $x$ and $i$. A canonical example is small rotations and translations on natural images. One of the major issues in computer vision arises from the instability of the pixel representation to these transformations. Various descriptors such as SIFT \cite{sift} and HOG \cite{hog} offer a more robust representation, and have been highly successful in many computer vision applications. We will show in section \ref{experiments} that even when using a relatively robust representation such as HOG, learning an invariant metric has a significant impact.\\

A natural way to mathematically formulate the idea of being insensitive to a transformation, is to require the leading term of the approximation to vanish in that direction. In our case this means
\begin{equation}
(T(x_0)-x_0)^T\nabla^2_yd(x_0,y)(T(x_0)-x_0)=0.
\end{equation}

If we return to our basic intuition of the local Mahalanobis matrix as the Hessian matrix $\nabla^2_yd(x_0,y)$, we can now state the new local invariant Mahalanobis objective
\begin{equation}\label{invariant}
\begin{split}
M(x_0&) =\arg\min_M \frac{1}{2}||M||^2 \\
sub&ject\,\, to :\\ &(x_i - x_0)^TM(x_i - x_0) \geq 2\,\,\,\,\forall i \in \{1,..., n\} \\
&(T_j(x_0) - x_0)^TM(T_j(x_0) - x_0)=0\,\,\,\,\forall j \in \{1,..., k\} \\
&M \succeq 0
\end{split}
\end{equation}

We will show how by applying a small transformation to the data, we can reduce this to objective \ref{noRelax} which we can solved easily.

\begin{theorem}
Define $V=span\{T_1(x_0)-x_0,...,T_k(x_0)-x_0\}$, then the minimizer of objective \ref{noRelax} with $x_i-x_0$ replaced by $z_i$, its projection to $V^\perp$ is the minimizer of objective \ref{invariant}.
\end{theorem}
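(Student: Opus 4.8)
The plan is to use positive semidefiniteness to convert the $k$ equality constraints in objective \ref{invariant} into a single linear (subspace) condition on $M$, and then to show that the solution supplied by Theorems \ref{trmSVM} and \ref{trmRLX} satisfies that condition for free. Write $P$ for the orthogonal projection onto $V^\perp$, so that $z_i = P(x_i-x_0)$ and $x_i - x_0 = z_i + v_i$ with $v_i \in V$. First I would record the key consequence of the PSD constraint: writing $M = L^TL$, for any vector $w$ we have $w^TMw = ||Lw||^2$, so the equality $(T_j(x_0)-x_0)^TM(T_j(x_0)-x_0)=0$ is equivalent to $L(T_j(x_0)-x_0)=0$, hence to $M(T_j(x_0)-x_0)=0$. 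Thus, over the PSD cone, the invariance constraints say exactly that $V \subseteq \ker M$.

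Next I would use this to show that the margin constraints only see the $V^\perp$ component of the data. If $V \subseteq \ker M$ then $Mv_i = 0$, and by symmetry $v_i^TM = 0$, so expanding $x_i - x_0 = z_i + v_i$ collapses the cross terms and gives $(x_i - x_0)^TM(x_i - x_0) = z_i^TMz_i$. Consequently objective \ref{invariant} is exactly the problem of minimizing $\frac{1}{2}||M||^2$ over $\{M \succeq 0 : V \subseteq \ker M,\ z_i^TMz_i \geq 2\ \forall i\}$, which is the PSD objective \ref{noRelax} in the variables $z_i$ augmented by the one extra constraint $V \subseteq \ker M$.

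Finally I would remove that extra constraint at no cost. By Theorems \ref{trmSVM} and \ref{trmRLX}, the minimizer of objective \ref{noRelax} in the variables $z_i$ is PSD and has the form $M^* = \sum_i \alpha_i z_i z_i^T$ with $\alpha_i \geq 0$. Since every $z_i \in V^\perp$, each rank-one term $z_i z_i^T$ annihilates $V$, so $M^* v = 0$ for all $v \in V$; equivalently $PM^*P = M^*$, meaning $M^*$ equals its own projection to $V^\perp$ and already satisfies $V \subseteq \ker M^*$. Hence $M^*$ is feasible for objective \ref{invariant}. Because the feasible set of objective \ref{invariant} is contained in that of the $z_i$-version of objective \ref{noRelax}, both minimize the same strictly convex objective $\frac{1}{2}||M||^2$, and the minimizer $M^*$ of the larger problem lands inside the smaller feasible set, $M^*$ is necessarily the minimizer of objective \ref{invariant} as well; uniqueness follows from strict convexity on a convex feasible set.

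The main obstacle here is conceptual rather than computational: it is the step that makes the whole reduction go through, namely verifying that the SVM representation of the solution as a nonnegative combination of the terms $z_i z_i^T$ forces $V \subseteq \ker M^*$ automatically, so that the invariance constraints never have to be imposed explicitly. Everything else is a containment-of-feasible-sets argument once the PSD constraint has been recast as the kernel condition $V \subseteq \ker M$.
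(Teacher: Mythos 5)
Your proposal is correct and follows essentially the same route as the paper's proof: convert the equality constraints under the PSD condition into the kernel condition $Mv=0$ for $v\in V$ (the paper argues via the eigenbasis of $M$ where you use $M=L^TL$, an equivalent step), decompose $x_i-x_0=z_i+v_i$ so the margin constraints reduce to $z_i^TMz_i\geq 2$, and then observe that the SVM-form solution $M=\sum_i\alpha_iz_iz_i^T$ from Theorem \ref{trmRLX} annihilates $V$ automatically. Your explicit feasible-set containment and strict-convexity remarks at the end are a slightly more careful packaging of the paper's ``forget the equality constraints and check they hold anyway'' step, but the argument is the same.
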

\begin{proof}
For PSD matrices, the constraint that $(T_j(x_0) - x_0)^TM(T_j(x_0) - x_0)=0$ is equivalent to $M(T_j(x_0) - x_0)=0$. This can be seen if we write the vector in the basis of $M$ eigenvectors, and notice that components with positive eigenvalues have a positive contribution to the quadratic form. This means that $Mv=0$ for all $v\in V$. Each vector $x_i-x_0$ can be split into two orthogonal elements, $x_i-x_0=z_i+v_i$ where $v_i$ is its projection onto $V$ and $z_i$ is its projection onto $V^\perp$. Our equality constraints $(T_j(x_0) - x_0)^TM(T_j(x_0) - x_0)=0$ now imply
\begin{equation}
(x_i-x_0)^TM(x_i-x_0)=(z_i+v_i)^TM(z_i+v_i)=z_i^TMz_i
\end{equation}
since all the other terms vanish. We can now rewrite objective \ref{invariant} as

\begin{equation}
\begin{split}
M(x_0) &=\arg\min_M \frac{1}{2}||M||^2 \\
subject\,\, to :\,\,& z_i^TMz_i \geq 2\,\,\,\,\forall i \in \{1,..., n\} \\
& Mv=0\,\,\,\,\forall v \in V\\
&M \succeq 0
\end{split}
\end{equation}
If we forget the equality constrains we get objective \ref{noRelax} with $x_i-x_0$ replaced by $z_i$. To finish the proof we need to show that the solution to the optimization without the equality constraints, does indeed satisfy them. \\

As we have already seen in the proof of theorem \ref{trmRLX} the optimal solution is of the form $M=\sum\alpha_i\varphi(z_i)=\sum\alpha_iz_i\cdot z_i^T$. The vector $z_i$ is a member of $V^T$ so  for $v\in V$
\begin{equation}
Mv=\left(\sum\alpha_iz_i\cdot z_i^T\right)v=\sum\alpha_iz_i\cdot (z_i^Tv)=0
\end{equation}
Proving that the solution satisfies the equality constraints.
\end{proof}

A few comments are worth noting about this formulation. First the problem may not be linearly separable, although in our experiments with real data we did not encounter any unseperable case. This can be easily solved, if needed, by the standard method of adding slack variables. Second, the algorithm just adds a simple preprocessing step to the previous algorithm and runs in approximately the same time. 
\section{Experiments}\label{experiments}
\subsection{Running time}
We compared running the optimization with an SVM solver \cite{libsvm}, to solving it as a semidefinite problem and as a quadratic problem (relaxing the semidefinite constraint). The main limitation when running off-the-shelf solvers is memory. Quadratic or semidefinite solvers need a constraint matrix, which in our case is a full matrix of size $n\times d^2$ where $n$ is the number of samples and $d$ is the data dimension. We tested all three approaches on the MNIST dataset of dimension 784 using only 5000 negative examples, as this already resulted in a matrix of size 24.6Gb. \\

Currently first order methods, such as ADMM  \cite{ADMM}, are the leading approaches to solving problems such as quadratic and semidefinite programming for large matrices. We used YALMIP for modeling and solved using SCS \cite{scs}. The time to run this as an semidefinite program was $1152\pm417sec$. The time it took to run this as a quadratic program was $545\pm74sec$. In comparison, when we run this as an SVM problem it took at most  $0.36sec$. We excluded the time needed to build the $n\times d^2$ constraint matrix for the quadratic and semidefinite solvers. \\

This order of magnitude improvement should not be a surprise. It is a well known that while SVM can be solved as a quadratic program, generic quadratic solvers perform much slower then solver designed specifically for SVM. \\
\subsection{MNIST}
\begin{table}
\caption{Classification error for MNIST dataset.}
\label{mnistErr}
\vskip 0.15in
\begin{center}
\begin{small}
\begin{sc}
\begin{tabular}{lcccr}
\hline
Method & Error\\
\hline
eSVM    & 1.75\%  \\
eSVM+shifts & 1.59\% \\
Local Mahal    & 1.69\%  \\
quadSVM+shifts    & 1.50\%     \\
inv-Mahal (our method) & 1.26\%   \\
LMNN     & 1.69\% \\
MLMNN      & 1.18\%  \\
   \\
\hline
\end{tabular}
\end{sc}
\end{small}
\end{center}
\vskip -0.1in
\end{table}
The MNIST dataset is a well known digit recognition dataset, comprising of  $28\times 28$ grayscale images on which we perform deskewing preprocessing. For each of the $60,000$ training images we computed a local Mahalanobis distance and local invariant Mahalanobis(using only negative examples). On test time we performed $knn$ classification with $k=3$ using the local metrics. We show some examples of nearest neighbours in Figure \ref{mnistFig}. We compared this with exemplar-SVM, as the leading technique most similar to ours. We also compared our scores to exemplar-SVM where we add the tansformed images as positive training data. To show the importance of the invariance objective, we compare also to SVM with quadratic kernel to which we add the transformed data as positive training data (unlike the way we use the shifted data). Finally, we compared our results to the state-of-the-art metric learning LMNN method (linear metric), and to MLMNN,  a local version of LMNN, which learns multiple metrics (but not one per datum). \\

As can be seen in table \ref{mnistErr}, we perform much better then exemplar SVM and are comparable with MLMNN. It is important to note that unlike MLMNN, we compare each datum only to negatives, so our methods is applicable in scenarios where MLMNN is not. \\

\begin{figure}[h!]\label{mnistFig}
  \centering
    \includegraphics[width=0.3\textwidth]{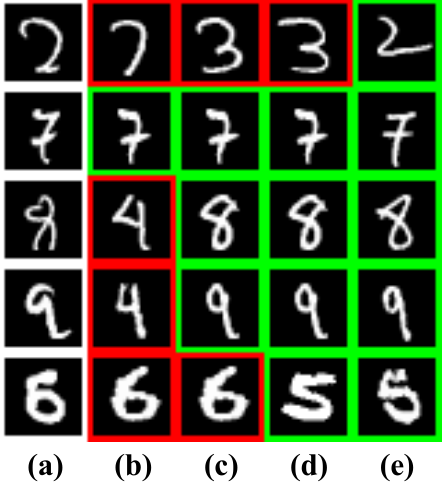}
      \caption{Nearest neighbour for various matrices. (a) original image (b) $L_2$ distance (c) exemplar-SVM (d) local-Mahalanobis (e) local invariant Mahalanobis}
\end{figure}

Another key observation is the difference between the invariant-Mahalanobis and the quadratic-SVM with shifts. While very similar functionally, we see that looking at the problem as a local Mahalanobis matrix gives important intuition, i.e. the way to use the shifted images, that leads to better performance.

\subsection{Labeling faces in the wild (LFW)}
LFW is a challenging dataset containing 13,233 face images of 5749 different individuals with a high level of variability. The LFW dataset is divided into 10 subsets, when the task is to classify 600 pairs of images from one subset to same/not-same using the other 9 subsets as training data. We perform the unsupervised LFW task, where we do not use any labelling inside the training images we get, besides the fact that they are different than both test images. \\

We used the aligned images \cite{deepFunnel} and represented using HOG features \cite{hog}. We compared our results to a cosine similarity baseline, to exemplar-SVM and exemplar-SVM with shifts. We note that we cannot use LMNN or MLMNN on this data, as we only have negative images with a single positive image.

\begin{table}[h]
\caption{Classification error for LFW dataset.}
\label{LFWerr}
\vskip 0.15in
\begin{center}
\begin{small}
\begin{sc}
\begin{tabular}{lcccr}
\hline
Method & Error\\
\hline
cosine similarity & 30.57$\pm$ 1.4\% \\
eSVM    & 26.90$\pm$2.2\%  \\
eSVM+shifts & 27.12$\pm$2.3\% \\
Local Mahal    & 19.85$\pm$1.3\%  \\
inv-Mahal (our method) & 19.48$\pm$1.5\%   \\
   \\
\hline
\end{tabular}
\end{sc}
\end{small}
\end{center}
\vskip -0.1in
\end{table}

As we can see from table \ref{LFWerr}, the local Mahalanobis greatly out-performs the exemplar-SVM. We also see that even when using robust features such as HOG, learning an invariant metric improves performance, albeit to a lesser degree.
\section{Summary}
We showed an efficient way to learn a local Mahalanobis metric given a query datum and a set of negative data points. We have also shown how to incorporate prior knowledge about our data, in particular the transformations to which it should be robust, and use it to learn locally invariant metrics. We have shown that our methods are competitive with leading methods while being applicable to other scenarios where methods such as LMNN and MLMNN cannot be used. 
\bibliography{local_inv_arxiv}

\begin{thebibliography}{}

\bibitem[Bar-Hillel et~al., 2005]{eqConst}
Bar-Hillel, A., Hertz, T., Shental, N., and Weinshall, D. (2005).
\newblock Learning a mahalanobis metric from equivalence constraints.
\newblock {\em JMLR}.

\bibitem[Bellet et~al., 2014]{survey}
Bellet, A., Habrard, A., and Sebban, M. (2014).
\newblock A survey on metric learning for feature vectors and structured data.
\newblock {\em arXiv:1306.6709}.

\bibitem[Boyd et~al., 2010]{ADMM}
Boyd, S., Parikh, N., Chu, Eric~Peleato, B., and Eckstein, J. (2010).
\newblock Distributed optimization and statistical learning via the alternating
  direction method of multipliers.
\newblock {\em Foundations and Trends in Machine Learning}.

\bibitem[Burges, 1998]{svmTut}
Burges, C. (1998).
\newblock A tutorial on support vector machines for pattern recognition.
\newblock {\em Data Mining and Knowledge Discovery}.

\bibitem[Chang and Lin, 2011]{libsvm}
Chang, C.-C. and Lin, C.-J. (2011).
\newblock {LIBSVM}: A library for support vector machines.
\newblock {\em ACM Transactions on Intelligent Systems and Technology}.

\bibitem[Chechik et~al., 2010]{deepRank}
Chechik, G., Sharma, V., Shalit, U., and Bengio, S. (2010).
\newblock Large scale online learning of image similarity through ranking.
\newblock {\em JMLR}.

\bibitem[Cotter et~al., 2013]{ssvm}
Cotter, A., Shalev-Shwartz, S., and Srebro, N. (2013).
\newblock Learning optimally sparse support vector machines.
\newblock {\em ICML}.

\bibitem[Dalal and Triggs, 2005]{hog}
Dalal, N. and Triggs, B. (2005).
\newblock Histograms of oriented gradients for human detection.
\newblock {\em CVPR}.

\bibitem[Davis et~al., 2007]{infoMet}
Davis, J., Kulis, B., Jain, P., Sra, S., and Dhillon, I. (2007).
\newblock Information-theoretic metric learning.
\newblock {\em ICML}.

\bibitem[Frome et~al., 2006]{local}
Frome, A., Singer, Y., and Malik, J.~. (2006).
\newblock Image retrieval and classification using local distance functions.
\newblock {\em NIPS}.

\bibitem[Guillaumin et~al., 2009]{faceId}
Guillaumin, M., Verbeek, J., and Schmid, C. (2009).
\newblock Is that you? metric learning approaches for face identification.
\newblock {\em ICCV}.

\bibitem[Hoi et~al., 2008]{imRet}
Hoi, S., Liu, W., and Chang, S.-F. (2008).
\newblock Semi-supervised distance metric learning for collaborative image
  retrieval.
\newblock {\em CVPR}.

\bibitem[Huang et~al., 2012]{deepFunnel}
Huang, G., Mattar, M.~A., Lee, H., and Learned-Miller, E. (2012).
\newblock Learning to align from scratch.
\newblock {\em NIPS}.

\bibitem[Kedem et~al., 2014]{nonLinear}
Kedem, D., Tyree, S., Weinberger, K., and Sha, F. (2014).
\newblock Non-linear metric learning.
\newblock {\em NIPS}.

\bibitem[Lecun et~al., 1998]{cnn}
Lecun, Y., Bottou, L., Bengio, Y., and Haffner, P. (1998).
\newblock Gradient-based learning applied to document recognition.
\newblock {\em Proc. IEEE}.

\bibitem[Lim and Lanckriet, 2014]{ranking}
Lim, D. and Lanckriet, G. (2014).
\newblock Efficient learning of mahalanobis metrics for ranking.
\newblock {\em ICML}.

\bibitem[Lowe, 2004]{sift}
Lowe, D. (2004).
\newblock Distinctive image features from scale-invariant keypoints.
\newblock {\em IJCV}.

\bibitem[Malisiewicz et~al., 2012]{exemplar}
Malisiewicz, T., Gupta, A., and Efros, A. (2012).
\newblock Ensemble of exemplar-svms for object detection and beyond.
\newblock {\em ICCV}.

\bibitem[O'Donoghue et~al., 2013]{scs}
O'Donoghue, B., Chu, E., Parikh, N., and Boyd, S. (2013).
\newblock Operator splitting for conic optimization via homogeneous self-dual
  embedding.
\newblock {\em arXiv:1312.3039}.

\bibitem[Wang et~al., 2012]{PMLM}
Wang, J., Woznica, A., and Kalousis, A. (2012).
\newblock Parametric local metric learning for nearest neighbor classification.
\newblock {\em NIPS}.

\bibitem[Weinberger and Saul, 2009]{lmnn}
Weinberger, K. and Saul, L. (2009).
\newblock Distance metric learning for large margin nearest neighbor
  classification.
\newblock {\em JMLR}.

\bibitem[Xiang et~al., 2008]{clustering}
Xiang, S., Nie, F., and Zhang, C. (2008).
\newblock Learning a mahalanobis distance metric for data clustering and
  classification.
\newblock {\em Pattern Recognition}.

\end{thebibliography}
\bibliographystyle{alpha}

\end{document}